\pdfoutput=1
\documentclass{article}
\usepackage{iclr2027_conference,times}
\iclrpreprint
\usepackage{amsmath,amssymb,amsthm,mathtools,bm}
\usepackage{booktabs,multirow,array}
\usepackage{colortbl}
\usepackage{flafter}
\usepackage{graphicx,subcaption}
\usepackage{algorithm}
\usepackage[noend]{algpseudocode}
\usepackage{microtype}
\usepackage{enumitem}
\usepackage{hyperref}
\usepackage{url}
\hypersetup{colorlinks=true,linkcolor=[rgb]{0.12,0.23,0.50},citecolor=[rgb]{0.12,0.23,0.50},urlcolor=[rgb]{0.12,0.23,0.50}}
\hypersetup{pdftitle={Refresh or Realize? Compute Allocation in Drifting Models},
  pdfauthor={Sipeng Chen, Xu Zheng, Shibo Li}}

\usepackage[nameinlink,capitalize]{cleveref}
\usepackage{xspace}
\newcommand{\R}{\mathbb{R}}
\newcommand{\E}{\mathbb{E}}
\newcommand{\sg}{\operatorname{sg}}
\newcommand{\Range}{\operatorname{Range}}
\newcommand{\Null}{\operatorname{Null}}
\newcommand{\diag}{\operatorname{diag}}

\newcommand{\norm}[1]{\left\lVert #1\right\rVert}
\newcommand{\ip}[2]{\left\langle #1,#2\right\rangle}
\newcommand{\pinv}{\dagger}
\newcommand{\kmat}{\mathbf{K}}
\newcommand{\jmat}{\mathbf{J}}
\newcommand{\vstack}{\mathbf{V}}
\newcommand{\hstack}{\mathbf{H}}
\newcommand{\thet}{\boldsymbol{\theta}}

\newtheorem{theorem}{Theorem}
\newtheorem{proposition}[theorem]{Proposition}

\newtheorem{lemma}[theorem]{Lemma}

\title{Refresh or Realize? Compute Allocation in Drifting Models}
\author{%
Sipeng Chen \\
Department of Computer Science\\
Florida State University\\
\texttt{sc25bg@fsu.edu}
\And
Xu Zheng \\
Knight Foundation School of Computing\\
and Information Sciences\\
Florida International University\\
\texttt{xzhen019@fiu.edu}
\And
Shibo Li \\
Department of Computer Science\\
Florida State University\\
\texttt{sl24bp@fsu.edu}
}

\begin{document}
\maketitle

\begin{abstract}
Drifting Models train a one-step generator by recomputing a finite-sample drift field at every
iteration and taking an optimizer step toward the drifted target. The field says how generated
samples should move, but the step is taken in parameters shared by all samples, so the motion the
network actually makes need not match the motion it was given. This leaves a basic training question
open: should extra compute go into fitting the current target more closely, or into recomputing the
field? We study it on ImageNet $256\times256$. Holding the target fixed for $k$ optimizer steps and
measuring the realized displacement, we find that deeper fitting does bring the network closer to
the frozen target, and that the number of steps needed before it makes any net progress drops from
about sixteen early in training to one later on. When the extra steps come for free, $k=2$ also
lowers FID. Once they are paid for, the result flips: at approximately matched measured wall-clock,
spending the budget on fresh fields gives lower FID than deeper fitting, on both training seeds. The
target itself shows why a fresh field is worth so much. Redrawing the finite support rotates its
direction far more than a parameter update does (cosine $\approx0.3$--$0.6$ against $\approx0.95$),
and a correction that is optimal in field space is not reliably better in FID than a
parameter-free one. For Drifting, fitting each target well and spending compute well are different
goals.
\end{abstract}

\section{Introduction}

Drifting Models~\citep{deng2026drifting} train a one-step generator around a strikingly simple rule.
A vector field $V_{p,q}$, computed from generated samples and real data, says how each generated
sample should move so that the generated distribution $q$ drifts toward the data distribution $p$,
\begin{equation}
    x_{i+1}=x_i+V_{p,q_i}(x_i),
    \label{eq:intended_update_intro}
\end{equation}
and training carries the rule out by freezing the drifted point $x+V$ as a regression target and
taking an optimizer step toward it. Read this way, a training iteration moves samples along the
field.

\begin{figure}[t]
    \centering
    \includegraphics[width=\linewidth]{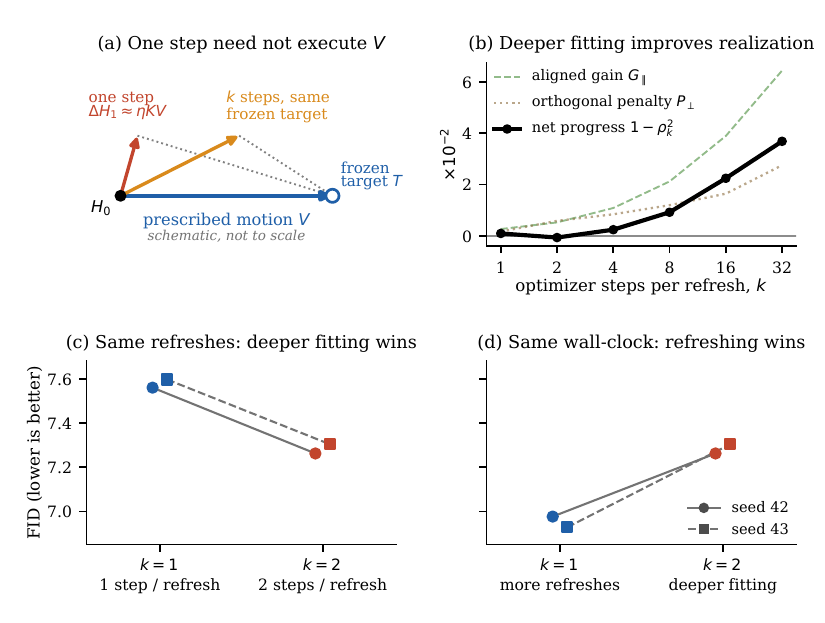}
    \caption{\textbf{A field can become easier to realize and still not be the best place to spend
    compute.} \textbf{(a)}~Drifting prescribes a motion $V$ toward a frozen target; one optimizer step
    executes a tangent-filtered motion $\Delta H_1\approx\eta KV$ that need not coincide with $V$;
    repeated steps against the same frozen target move closer (dotted: remaining residual).
    \textbf{(b)}~Measured on ImageNet $256\times256$ at the state from which all our downstream runs
    start: net progress toward the frozen target grows with depth apart from a small dip at $k=2$.
    \textbf{(c)}~With the number of refreshes fixed, $k=2$ takes two optimizer steps per refresh and
    reaches lower FID on both training seeds. \textbf{(d)}~At approximately matched measured
    wall-clock, $k=1$ affords more refreshes and reaches lower FID on both seeds. Panels (c) and (d)
    share one FID axis.}
    \label{fig:story}
\end{figure}

In practice the move is made in parameter space. The optimizer changes one set of weights shared by
every sample in the batch, so all outputs move together, and to first order a single gradient step
displaces the stacked outputs $\hstack$ by
\begin{equation}
    \Delta\hstack_1=\eta\kmat\vstack,\qquad \kmat=\jmat\jmat^\top,\quad \jmat=\partial\hstack/\partial\thet .
    \label{eq:intro_kv}
\end{equation}
Sample $i$ thus moves by a mixture $\eta\sum_j\kmat_{ij}\vstack_j$ of the motions prescribed for the
whole batch rather than by its own $\vstack_i$, and only in special geometries does $\eta\kmat\vstack$
coincide with $\vstack$. In general the motion the field prescribes and the motion one step executes
differ (\cref{fig:story}a), and \cref{eq:intended_update_intro} is silent about the difference. This paper is about that
difference: our object of study is the training loop that estimates the existing field from data and
realizes it with an optimizer, rather than the definition of the field.

The most direct way to study this gap is to remove the target as a variable. We freeze one target and
take $k$ optimizer steps toward it instead of one; since the target cannot change, whatever happens
to the executed motion is due to fitting alone. We use this frozen-target construction purely as a
probe of realization. Measured on ImageNet~\citep{deng2009imagenet} at $256\times256$ with an exact decomposition of the
frozen-target residual (\cref{sec:geometry}), deeper fitting does close part of the gap
(\cref{fig:story}b), and closes it more readily as training proceeds: the depth at which fitting
first makes net progress toward the target falls from about sixteen steps at checkpoint 5k to a
single step by 20k.

Whether that improvement is worth its cost is a separate question. Every Drifting iteration spends compute on two things:
refreshing the field, by drawing a new finite support of real examples and building a new target,
and realizing it with optimizer steps. Since deeper fitting demonstrably works, a case against it
cannot rest on an optimizer that fails to fit. When the extra steps cost nothing, deeper fitting
wins: at an equal number of refreshes, two steps per refresh reach lower FID than one on both
training seeds (\cref{fig:story}c). When the steps are paid for, the ordering reverses. At
approximately matched measured wall-clock, the run that spends its budget on more refreshes, one step
each, reaches lower FID on both seeds (\cref{fig:story}d), although it used $9.8\%$ less time than its
counterpart on one seed and $1.9\%$ more on the other.

A local, matched-compute version of the same decision points the same way. Starting from a common
parameter state, refreshing the target before the next update beats a second update toward the
stale target in all $40$ probes we ran, across three support sizes and three training stages. The
same probes show where the value of refreshing comes from. At fixed parameters, a newly drawn support
rotates the target direction far more (cosine $0.31$--$0.59$) than one parameter update does at fixed
support ($0.94$--$0.98$), so a second step against an old target keeps fitting one particular sample
of a highly variable quantity, while a refresh replaces it. The size of that rotation is not a
calibrated predictor of the local gain from refreshing, which stays nearly constant across support
sizes that rotate the target by very different amounts (\cref{sec:support}).

The target also depends strongly on how it is constructed. A
same-budget correction to the target, calibrated by a held-out audit that follows a clean
$N^{-1/2}$ law, is not reliably better in FID than a parameter-free split average, and removing a
nominal $10^{-6}$ floor from the production affinity changes the aggregated field by $12\%$.

These measurements describe how the empirical field, its finite support and shared
neural optimization interact inside the Drifting training loop. The executed motion is a
tangent-filtered version of the prescribed one; how much of the prescription is carried out depends
on realization depth, training state and batch geometry; executing a given target better is not the
same as spending compute better; and the target itself is governed by its finite support and by
implementation details. Refresh frequency, realization depth, support construction and affinity
design are separate design axes of Drifting training, which \cref{eq:intended_update_intro} folds
into a single notion of more optimization. Our contributions are:
\begin{enumerate}[leftmargin=*,itemsep=1pt,topsep=2pt]
\item an exact decomposition of frozen-target progress into aligned gain and orthogonal penalty,
which makes the realization gap of Drifting measurable, and the finding on ImageNet that deeper
fitting closes it, increasingly so over training;
\item the refresh--realize reversal: extra fitting lowers FID when it is free, whereas under
approximately matched measured wall-clock refreshing wins on both training seeds and in all $40$
local matched-compute probes;
\item evidence that support refresh, rather than the parameter update, dominates how the empirical
target changes between steps; that field-space optimality of the target does not reliably transfer
to FID; and that a nominal affinity floor materially changes the production field.
\end{enumerate}

\section{Background: Drifting as Refresh and Realize}
\label{sec:background}

Following~\citet{deng2026drifting}, let $x=f_\theta(\epsilon)$ with $\epsilon\sim p_\epsilon$ and
$q_\theta=(f_\theta)_\#p_\epsilon$. Drifting uses positive samples $y^+\sim p$ and negative samples
$y^-\sim q$, a kernel $k(x,y)=\exp(-\norm{x-y}/\tau)$, and the jointly normalized mean-shift
field~\citep{cheng1995mean}
\begin{equation}
    V_{p,q}(x)=\frac{1}{Z_pZ_q}\,\E_{p,q}\!\left[k(x,y^+)\,k(x,y^-)\,(y^+-y^-)\right],
    \label{eq:drift_field}
\end{equation}
which is anti-symmetric, $V_{p,q}=-V_{q,p}$, and vanishes when $p=q$. Training replaces the update
\cref{eq:intended_update_intro} by the frozen-target regression
\begin{equation}
    \mathcal L(\theta)=\E_\epsilon\norm{\phi(f_\theta(\epsilon))
    -\sg\!\big(\phi(f_\theta(\epsilon))+V\big)}_2^2 ,
    \label{eq:original_loss}
\end{equation}
where $\phi$ stacks the weighted features of a pretrained latent MAE~\citep{he2022masked} and the
stop-gradient prevents differentiation through the field~\citep{chen2021exploring}. The ImageNet
implementation uses temperatures $\tau\in\{0.02,0.05,0.2\}$, normalizes each temperature's force to
unit RMS before summing, and trains with AdamW~\citep{loshchilov2019adamw}.

\Cref{eq:drift_field} is an
expectation. The implementation draws a finite positive support $S$ of $N$ examples from a
per-class memory bank, together with a negative support, and forms $\widehat V_S$, which is frozen
into the target of \cref{eq:original_loss}. We call constructing $\widehat V_S$ a
\emph{refresh} and optimizing against the resulting frozen target \emph{realization}. The
original algorithm takes exactly one optimizer step per refresh; we write $k$ for the number of
steps taken per refresh.

\section{Deeper Optimization Does Realize the Frozen Field}
\label{sec:geometry}

\begin{figure}[t]
    \centering
    \includegraphics[width=\linewidth]{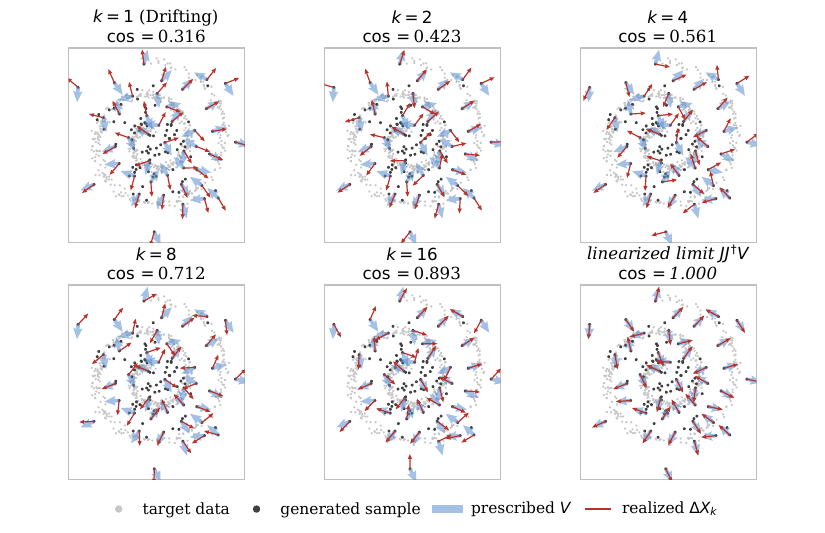}
    \caption{\textbf{Repeated fitting turns the realized motion toward the prescribed one.}
    Swiss-roll Drifting at one frozen training state. Blue: prescribed motion $V$, drawn wide and
    underneath; red: motion realized after $k$ AdamW steps toward the same frozen target. Arrows are
    normalized for display; cosines use the full displacements of all 120 generated samples, and every
    panel shows the same 42 samples. The final panel is not an optimizer run but the $k\to\infty$
    limit $JJ^{\dagger}V$ of the fixed-kernel model at the same state, where $K$ is full rank and the
    prescribed motion is realized exactly.}
    \label{fig:realization}
\end{figure}

\subsection{Why one step need not execute the prescribed motion}

Fix a latent batch and stack the generator features that Drifting regresses into $\hstack(\thet)$.
The loss pulls $\hstack$ toward the frozen target $\hstack+\vstack$, so its gradient in parameter
space is $-\jmat^\top\vstack$. Mapping the resulting parameter change back to feature space gives the
following.

\begin{proposition}[One-step realization]
\label{prop:one_step}
One gradient step with step size $\eta$ moves the features by
$\Delta\hstack_1=\eta\kmat\vstack+O(\norm{\Delta\thet}^2)$, where $\kmat=\jmat\jmat^\top$.
\end{proposition}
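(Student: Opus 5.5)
The plan is to write one gradient step on the frozen-target loss explicitly in parameter space, then push it back to feature space with a first-order Taylor expansion of $\hstack(\thet)$. Only two facts are needed: the stop-gradient fixes the target, and $\hstack$ is differentiable near the current parameters.

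First, the gradient. Writing the batch loss as $\tfrac12\norm{\hstack(\thet)-\sg(\hstack+\vstack)}^2$, the stop-gradient makes the target a constant $\hstack_0+\vstack$ evaluated at the current $\thet_0$. The chain rule then gives
\[
\nabla_\thet\mathcal L=\jmat^\top\big(\hstack(\thet_0)-\hstack_0-\vstack\big)=-\jmat^\top\vstack,
\qquad \jmat=\partial\hstack/\partial\thet\big|_{\thet_0}.
\]
A plain gradient step with step size $\eta$ is therefore $\Delta\thet=\eta\jmat^\top\vstack$. The factor $\tfrac12$ versus the factor $2$ in \cref{eq:original_loss} only rescales $\eta$, and I would absorb it into $\eta$ with a remark.

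Second, map this back to features. Assuming $\hstack$ is $C^2$, or more weakly that $\jmat$ is locally Lipschitz, in a neighborhood of $\thet_0$, Taylor's theorem with remainder gives
\[
\hstack(\thet_0+\Delta\thet)-\hstack(\thet_0)=\jmat\Delta\thet+O(\norm{\Delta\thet}^2).
\]
The remainder constant is controlled by a bound on the second derivative (or the Lipschitz constant of $\jmat$) on the segment $[\thet_0,\thet_0+\Delta\thet]$. Substituting $\Delta\thet$ gives $\Delta\hstack_1=\eta\jmat\jmat^\top\vstack+O(\norm{\Delta\thet}^2)=\eta\kmat\vstack+O(\norm{\Delta\thet}^2)$, which is the claim. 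For the per-sample reading, note that $\kmat_{ij}=\jmat_i\jmat_j^\top$ couples samples through the shared parameters.

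The only real obstacle is the regularity and the meaning of "one step". The ReLU-type networks and the AdamW optimizer used in practice are not covered as written. For the optimizer, I would either state the proposition for plain gradient descent, or note that a preconditioned step $\Delta\thet=\eta P\jmat^\top\vstack$ gives $\kmat=\jmat P\jmat^\top$ by the same argument. For nonsmoothness, I would assume differentiability almost everywhere and local Lipschitz Jacobians along the step, which is the standard NTK-style setting.
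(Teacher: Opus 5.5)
Your argument is correct and is essentially the paper's proof: the stop-gradient gives $\nabla\ell(\thet_0)=-\jmat^\top\vstack$, hence $\Delta\thet=\eta\jmat^\top\vstack$, and a first-order expansion under a Lipschitz-Jacobian assumption turns this into $\eta\kmat\vstack$ plus a quadratic remainder. The paper makes that remainder explicit as $\norm{R}\le\tfrac{L_J}{2}\eta^2\norm{\jmat^\top\vstack}^2$ via the fundamental theorem of calculus, and your remarks on absorbing the loss constant into $\eta$ and on the preconditioned kernel $\jmat P\jmat^\top$ likewise match its appendix.
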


Written per sample, $\Delta\hstack_{1,i}=\eta\sum_j\kmat_{ij}\vstack_j$: sample $i$ moves by a
mixture of the motions prescribed for every sample in the batch, weighted by how strongly the shared
parameters couple their outputs. Along the eigendirections of $\kmat$, components of $\vstack$ that
the network can move easily are amplified and the rest are suppressed, so one step generally changes
both the size and the direction of the prescribed motion.

If the target is held fixed and $\kmat$ is
treated as constant, repeated steps correct part of what earlier steps missed.

\begin{proposition}[Fixed-kernel realization model]
\label{thm:k_step}
Under a fixed $\kmat$ and $0<\eta<2/\lambda_{\max}$, $k$ gradient steps against the frozen target
move the features by
\begin{equation}
    \Delta\hstack_k=\big[I-(I-\eta\kmat)^k\big]\vstack ,
    \label{eq:rk}
\end{equation}
which converges to the part of $\vstack$ the network can realize, $\jmat\jmat^{\pinv}\vstack$.
\end{proposition}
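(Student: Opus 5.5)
Under the fixed-kernel model the features respond linearly to parameter changes, $\hstack(\thet+\Delta\thet)=\hstack(\thet)+\jmat\Delta\thet$ with $\jmat$ constant. The plan is to turn the $k$ gradient steps into a linear recursion on the residual and then read off the spectral limit. Let $\hstack_0$ be the features at the start and $T=\hstack_0+\vstack$ the frozen target. After $t$ steps, write $\hstack_t$ for the features and $R_t=T-\hstack_t$ for the remaining residual, so $R_0=\vstack$.

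\textbf{Step 1: the residual recursion.} By the gradient computation preceding \cref{prop:one_step}, step $t+1$ moves the parameters by $\Delta\thet=\eta\jmat^\top R_t$. With $\jmat$ fixed this gives $\hstack_{t+1}=\hstack_t+\eta\kmat R_t$, which is \cref{prop:one_step} with $\vstack$ replaced by the current residual and no remainder. Hence
\[
R_{t+1}=(I-\eta\kmat)R_t, \qquad R_k=(I-\eta\kmat)^k\vstack,
\]
and
\[
\Delta\hstack_k=\vstack-R_k=\big[I-(I-\eta\kmat)^k\big]\vstack,
\]
which is \cref{eq:rk}.

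\textbf{Step 2: the limit.} Since $\kmat=\jmat\jmat^\top$ is symmetric positive semidefinite, diagonalize it as $\kmat=\sum_i\lambda_i u_iu_i^\top$ with $\lambda_i\ge0$. Then
\[
I-(I-\eta\kmat)^k=\sum_i\big[1-(1-\eta\lambda_i)^k\big]u_iu_i^\top .
\]
For $\lambda_i>0$, the condition $0<\eta<2/\lambda_{\max}$ gives $|1-\eta\lambda_i|<1$, so the coefficient tends to $1$. For $\lambda_i=0$ the coefficient is $0$ for every $k$. The limit is therefore the orthogonal projector onto $\Range(\kmat)$.

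\textbf{Step 3: identify the projector.} It remains to show that this projector equals $\jmat\jmat^\pinv$. Two facts are needed:
\begin{itemize}
\item $\Range(\jmat\jmat^\top)=\Range(\jmat)$. Indeed $\Null(\jmat\jmat^\top)=\Null(\jmat^\top)$, because $\jmat\jmat^\top x=0$ implies $\norm{\jmat^\top x}^2=0$; taking orthogonal complements gives equality of ranges.
\item $\jmat\jmat^\pinv$ is the orthogonal projector onto $\Range(\jmat)$. This is a standard Moore--Penrose property: it is symmetric and idempotent, with range $\Range(\jmat)$.
\end{itemize}
Hence $\Delta\hstack_k\to\jmat\jmat^\pinv\vstack$.

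\textbf{Main subtlety.} The only delicate point is the zero eigenvalues of $\kmat$, which are generic when the number of parameters is less than the stacked output dimension. Those components never move and are exactly the unrealizable part of $\vstack$. The step-size bound only controls the positive eigenvalues, so it is not the main obstacle. It is also worth stating plainly that the model linearizes the network: fixed $\jmat$ is what makes Step 1 exact, and it is an assumption, not an approximation that is controlled here.
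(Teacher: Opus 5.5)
Your proof is correct and follows the paper's argument: the same residual recursion $R_{t+1}=(I-\eta\kmat)R_t$ under a frozen $\jmat$, followed by a spectral reading of the filter $1-(1-\eta\lambda)^k$, with zero-eigenvalue components never moving. The only difference is cosmetic: the paper uses the compact SVD $\jmat=U_r\Sigma_rW_r^\top$, from which the limit $U_rU_r^\top\vstack=\jmat\jmat^{\pinv}\vstack$ (and also the parameter limit $\jmat^{\pinv}\vstack$) follows directly, whereas you diagonalize $\kmat$ and identify the projector through $\Range(\jmat\jmat^\top)=\Range(\jmat)$.
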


This is classical Landweber (Richardson) iteration~\citep{landweber1951}; proofs and spectral
identities are in \cref{app:proofs}. We use it only to motivate the question of whether repeated
fitting realizes the target better. Our ImageNet runs use AdamW with a time-varying preconditioner
and a nonlinear network, so the answer below is measured directly from optimizer-induced
displacements rather than predicted from $\kmat$.

\Cref{fig:realization} shows what repeated fitting looks like in a two-dimensional Swiss-roll version
of Drifting, where the whole state can be drawn. At one training state we freeze the target and let
AdamW take $k$ steps toward it. The realized motion turns toward the prescribed one as $k$ grows, and
its cosine with $V$ rises from $0.32$ at $k=1$ to $0.89$ at $k=16$. The last panel is the
$k\to\infty$ limit of the fixed-kernel model, $\jmat\jmat^{\pinv}\vstack$. At this state $\kmat$ has
full rank ($240$ of $240$ modes), so the linearized model realizes $V$ exactly, and the finite-step
shortfall comes from iteration rather than from a part of $V$ the network cannot reach. The wide
tangent spectrum (condition number $\approx1.9\times10^{3}$) fits this picture, with strong directions
realized within a few steps and weak ones needing many more, although the spectrum alone does not predict the AdamW trajectory. A checkerboard target shows the same finite-$k$ trend under both gradient descent and AdamW
(\cref{tab:toy_optimizer}). These two-dimensional problems make realization visible; every downstream
conclusion below is measured directly on ImageNet under AdamW.

\subsection{A realization diagnostic}

Let $\vstack$ be the prescribed field in the scaled coordinate of the loss, with the training
block weighting, and $\Delta\hstack_k$ the displacement actually produced by $k$ optimizer steps
against the frozen target. Define the executed fraction, relative move and orthogonal component
\begin{equation}
    a_k=\frac{\ip{\Delta\hstack_k}{\vstack}}{\norm{\vstack}^2},\qquad
    m_k=\frac{\norm{\Delta\hstack_k}}{\norm{\vstack}},\qquad
    o_k^2=m_k^2-a_k^2 .
\end{equation}
Because the step-$0$ residual is exactly $-\vstack$, the residual ratio
$\rho_k=\norm{\Delta\hstack_k-\vstack}/\norm{\vstack}$ satisfies
\begin{equation}
    \rho_k^2=(1-a_k)^2+o_k^2
    \qquad\Longleftrightarrow\qquad
    1-\rho_k^2=\underbrace{(2a_k-a_k^2)}_{\text{aligned gain }G_{\parallel,k}}
    -\underbrace{o_k^2}_{\text{orthogonal penalty }P_{\perp,k}} .
    \label{eq:identity}
\end{equation}
The identity holds for any displacement. Its value is as a measurement: it separates motion that
executes the prescribed field from motion that does not, and $\rho_k<1$ exactly when the aligned
gain exceeds the orthogonal penalty. On all measured ImageNet rows it reconstructs the recorded
$\rho_k$ to within $10^{-9}$.

\subsection{Measurement}

At the checkpoint from which every downstream run in \cref{sec:allocation} starts, net progress is
positive at $k=1$, dips slightly at $k=2$, and then grows with depth: from $k=1$ to $k=32$ the
executed fraction rises $24$-fold while the displacement grows only $4$-fold, and net progress
reaches $3.7\times10^{-2}$ (\cref{fig:story}b; exact values in \cref{app:exact_geometry}).
Deeper optimization genuinely improves realization of the frozen field.

\Cref{tab:state} shows that this is a property of the training state. Early in training the
optimizer moves far relative to the prescribed field and mostly orthogonally to it, and net
progress stays negative until $k\approx16$. As training progresses, the one-step
displacement becomes smaller relative to the prescribed field but better aligned with it
($m_1:0.115\rightarrow0.041$, $\cos(\Delta\hstack_1,\vstack):0.024\rightarrow0.033$ from checkpoint
$5$k to $30$k), and the positive-realization threshold moves from roughly $k=16$ to $k=1$. Realization also depends on batch geometry: at checkpoint 5k, increasing the effective batch from $4$ to $64$ moves the first depth with net progress from $k=1$ to $k=16$ (\cref{app:exact_geometry}).

\begin{table}[h!]
\centering\small
\caption{\textbf{The depth at which optimization begins to realize the frozen field falls from
$k\approx16$ to $k=1$ during training.} Net progress $1-\rho_k^2$ ($\times10^{-3}$), $B_{\mathrm{eff}}=64$.
Shaded: first depth with positive net progress.}
\label{tab:state}
\setlength{\tabcolsep}{7pt}
\begin{tabular}{lrrrrrr}
\toprule
checkpoint & $k{=}1$ & $k{=}2$ & $k{=}4$ & $k{=}8$ & $k{=}16$ & $k{=}32$\\
\midrule
5k  & $-7.7$ & $-4.7$ & $-7.1$ & $-2.7$ & \cellcolor{black!10}$+11.0$ & $+31.9$\\
10k & $-1.2$ & $-4.7$ & $-4.6$ & \cellcolor{black!10}$+3.5$ & $+19.5$ & $+37.5$\\
20k & \cellcolor{black!10}$+0.6$ & $-1.2$ & $+3.3$ & $+10.8$ & $+23.3$ & $+41.2$\\
30k & \cellcolor{black!10}$+1.0$ & $-0.6$ & $+2.5$ & $+9.2$ & $+22.5$ & $+36.9$\\
\bottomrule
\end{tabular}
\end{table}

\section{Refresh or Realize? The Reversal}
\label{sec:allocation}

\Cref{sec:geometry} shows that extra optimization against a frozen target realizes it better. Training
poses a different question: under a fixed budget, is that optimization better spent realizing the
current field more accurately or obtaining a fresh one? We compare $k=1$ and $k=2$ optimizer steps per
refresh for latent Drifting on ImageNet $256\times256$ at high positive support, with EMA weights,
CFG $2.0$ and 50k-sample FID, on two independently trained seeds.

\begin{table}[h!]
\centering\small
\caption{\textbf{Extra realization helps when it is free; refreshing wins once it is paid for.}
FID under two budget definitions (lower is better; bold marks the better arm). Wall-clock is the
scheduler-reported elapsed time of each run.}
\label{tab:allocation}
\setlength{\tabcolsep}{6pt}
\begin{tabular}{lcccc}
\toprule
budget & seed & $k=1$ & $k=2$ & better\\
\midrule
\multirow{2}{*}{\shortstack[l]{equal refresh count\\{\footnotesize $k{=}2$ gets $2\times$ updates}}}
 & 42 & 7.56 & \textbf{7.26} & $k=2$\\
 & 43 & 7.60 & \textbf{7.30} & $k=2$\\
\midrule
\multirow{2}{*}{\shortstack[l]{$\approx$ matched measured\\wall-clock}}
 & 42 & \textbf{6.98} {\footnotesize(32.2\,h)} & 7.26 {\footnotesize(35.7\,h)} & $k=1$\\
 & 43 & \textbf{6.93} {\footnotesize(34.5\,h)} & 7.30 {\footnotesize(33.9\,h)} & $k=1$\\
\bottomrule
\end{tabular}
\end{table}

At an equal number of refreshes, $k=2$ receives twice the optimizer updates and reaches lower FID on
both seeds (\cref{tab:allocation}, top). Deeper realization is therefore not a worse procedure, and
the comparison rules out the explanation that $k=2$ simply optimizes badly.

Charging for the extra updates changes the outcome. At approximately matched measured wall-clock the
$k=1$ arm runs $45{,}252$ refreshes and optimizer updates, against $30{,}000$ refreshes and $60{,}000$
updates for $k=2$, and it reaches lower FID on both seeds (\cref{tab:allocation}, bottom):
\begin{quote}
\emph{Under approximately matched measured wall-clock, refreshing the field outperforms deeper
realization of the stale field in both training seeds.}
\end{quote}
The two arms are matched only approximately, and the residual mismatch runs in opposite directions:
on seed~42 the $k=1$ run used $9.8\%$ less time, on seed~43 $1.9\%$ more, while the FID ordering is
the same in both. Run lengths were fixed from an estimated per-step cost before training. The same
allocation decision, isolated at frozen checkpoints with matched compute and no scheduling effects,
favors refreshing in all $40$ probes we ran; \cref{sec:support} presents those probes together with what they reveal about the target.

The operational consequence for this ImageNet regime is direct: when one and two optimizer steps per
refresh are compared at approximately matched measured wall-clock, refreshing more often is the better
allocation than a second step toward the same stale target. This is not an ordering over arbitrary
depths, and we do not claim it for every field construction.

\section{Why Refreshing Pays: Support Dominates Local Target Change}
\label{sec:support}

\subsection{Support refresh rotates the target, and the local decision favors it}

Between consecutive steps the parameters move and the support is redrawn. We separate the two at
frozen checkpoints. From a state $\theta_0$ and target $T_0$ we take one update to $\theta_1$, then
rebuild the target at $\theta_1$ twice: once from the same support ($T_1^{\mathrm{same}}$) and once
from a disjoint, freshly drawn support ($T_1^{\mathrm{new}}$), with labels, generator noise and
guidance draws held fixed. The same probe then makes the allocation decision with matched compute:
\textsc{realize} takes a second update toward the stale $T_0$, \textsc{refresh} takes it toward
$T_1^{\mathrm{new}}$, and both are scored on two held-out supports that neither branch used. We ran
eight probes in each of five cells, covering support sizes $N\in\{64,128,256\}$ at checkpoint $30$k
and training stages $10$k, $20$k and $30$k at $N=256$ (\cref{app:extra_ablations}).

\begin{figure}[t]
    \centering
    \includegraphics[width=\linewidth]{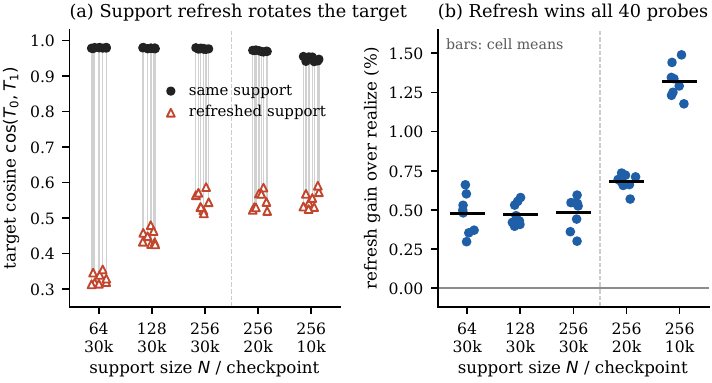}
    \caption{\textbf{Refreshing the support moves the target far more than a parameter update does,
    and refreshing wins every local allocation decision.} Forty probes, eight per position. The first three
    positions vary the support size at checkpoint 30k; the last three vary the training stage at
    $N=256$, sharing the $N=256$, 30k condition.
    \textbf{(a)}~Cosine between the old target and the target rebuilt after one update, from the same
    support (dots) or a freshly drawn one (triangles); lines pair the two for each probe.
    \textbf{(b)}~Held-out loss reduction of \textsc{refresh} over \textsc{realize} at matched compute;
    every probe is positive. Probes within a cell share a model state and are not independent.}
    \label{fig:oracle}
\end{figure}

\Cref{fig:oracle}a shows the first result. Rebuilding the target from the same support after one
update leaves its direction nearly unchanged (cosine $0.94$--$0.98$, mean $0.97$); drawing a new
support rotates it substantially (cosine $0.31$--$0.59$, mean $0.48$), and more so the smaller the
support. In the measured regime, resampling the finite support changes the target direction far more
than the local parameter update itself. In these probes staleness is dominated by support
resampling: a frozen target encodes one draw of a quantity that varies strongly from draw to draw,
while one parameter update barely moves it.

\Cref{fig:oracle}b shows the decision. Refreshing gives the lower held-out loss in all $40$ probes, by
$0.30\%$ to $1.49\%$ (median $0.56\%$). The probes share model states and support pools within a cell,
so we read this as a consistent effect across varied conditions rather than as $40$ independent
trials. The two panels also separate two things that are easy to conflate. At checkpoint $30$k, the
target rotates far more for $N=64$ than for $N=256$, yet the mean refresh gain is almost identical
($0.47\%$ and $0.48\%$); across training stages at $N=256$ the rotation barely changes while the gain
falls from $1.32\%$ at $10$k to $0.48\%$ at $30$k. Support sensitivity explains why refreshing changes
the target; its magnitude is not a calibrated predictor of how much a refresh is worth.

\subsection{Field-space optimality does not reliably transfer to FID}

If the support dominates the target, a natural response is to spend part of each refresh improving
the estimate. We test the cheapest version that keeps the data budget fixed: split the same $N$
positives into disjoint halves, form $V_{\mathrm{split}}=\tfrac12(V_A+V_B)$ and interpolate
$V_\beta=V_N+\beta(V_{\mathrm{split}}-V_N)$, so that $\beta=0$ is Drifting and $\beta=1$ is a
parameter-free split average. A held-out audit against a disjoint large-support reference selects
$\beta^\ast$ without training or FID. In this ImageNet audit $\beta^\ast$ follows an approximately
$N^{-1/2}$ law, with fitted exponent $-0.50$ ($R^2=0.996$) at checkpoint $30$k and $-0.47$ at $60$k
(\cref{fig:calibration}a).

\begin{figure}[t]
    \centering
    \includegraphics[width=\linewidth]{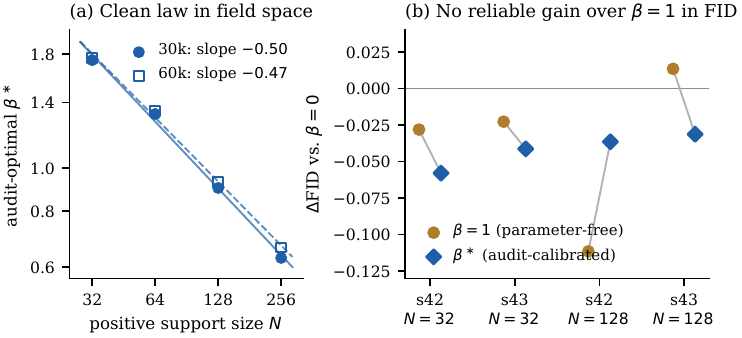}
    \caption{\textbf{A clean law in field space does not become a reliable gain in FID.}
    \textbf{(a)}~The audit-optimal coefficient follows an approximately $N^{-1/2}$ law, stable
    between checkpoints $30$k and $60$k. \textbf{(b)}~Downstream, the calibrated coefficient is not
    reliably better than the parameter-free $\beta=1$: it is better in three cells and worse in one,
    and the mean difference is $-0.005$ FID. FID averaged over three evaluation seeds; exact values in
    \cref{app:calibration}.}
    \label{fig:calibration}
\end{figure}

We froze $\beta^\ast$ before observing any FID and ran deterministic paired continuations on both
seeds. Averaged over three evaluation seeds, the calibrated correction lowers FID in all six
seed--support cells, by $0.040$ on average, but only $13$ of $18$ individual paired evaluations are
favorable, so the effect is comparable to evaluation noise. More to the point, the audit-optimal
coefficient does not reliably outperform the parameter-free split average (\cref{fig:calibration}b):
a correction that is optimal in field space is not reliably optimal for downstream FID. We had also
preregistered that the benefit would shrink as $N$ grows. The single-evaluation-seed results falsified
this; averaging over evaluation seeds restores the predicted ordering, which we report as exploratory.

\subsection{A nominal affinity floor materially changes the production field}

The production field symmetrizes each affinity as
$\sqrt{\max(p^{\mathrm{row}}_{ij}p^{\mathrm{col}}_{ij},\,10^{-6})}$ before forming the attraction and
repulsion coefficients. The floor looks like a numerical safeguard, but at checkpoint $30$k it binds
on about half of the $\tau=0.05$ affinities and on $91\%$ of the $\tau=0.02$ affinities. We recomputed
the field with and without it, holding the checkpoint, labels, generator noise, guidance draws,
generated features, supports and block weights identical, with a correctness gate against the
compiled production code and a materiality threshold fixed before the run (\cref{app:floor}).
Removing only the floor changes the aggregated production field by $12.3\%$ relative $L_2$ (cosine
$0.992$); the $\tau=0.05$ and $\tau=0.02$ branches change by $10.8\%$ and $21.3\%$. Because each
temperature branch is RMS-normalized, every branch keeps its norm, so the floor acts on the direction
of the field rather than its scale. The field Drifting optimizes in this regime includes the floor,
and our conclusions describe that production field.

\section{Related Work}

Drifting~\citep{deng2026drifting} arrived with a strong empirical result and comparatively little
theory, and most follow-up work has been about the field itself. It has been read as a long--short
flow map~\citep{li2026longshort}, as a Wasserstein gradient flow of a kernel-smoothed divergence
\citep{cao2026gradient,gretton2026wasserstein}, and as score matching between kernel-smoothed
distributions~\citep{turan2026secretly,lai2026unified}; other work changes the field, replacing its
displacement direction by kernel gradients~\citep{estebancasadevall2026kernel} or adding momentum to
counter spectral bias~\citep{brown2026secondorder}. These papers analyze the prescribed dynamics. Our
object is the training loop that implements them: how much of the prescribed motion a shared network
carries out, and how compute should be split between rebuilding the field and fitting it. A
better-founded field still has to be estimated from a finite support and realized by an optimizer, so
the two lines of work are complementary.

Drifting is one of several routes to one- or few-step generation, alongside flow matching and
rectified flow~\citep{lipman2023flow,liu2023flow}, consistency models~\citep{song2023consistency} and
shortcut models~\citep{frans2025shortcut}, usually on transformer backbones~\citep{peebles2023dit}.
What sets it apart for our purposes is that its regression target is rebuilt from data at every
step, which is why a refresh--realize trade-off exists at all.

The mechanism of Proposition~\ref{prop:one_step}, a parameter update acting on outputs through
$\kmat=\jmat\jmat^\top$, is the starting point of the neural tangent kernel
literature~\citep{jacot2018ntk,lee2019wide,adlam2020ntk,novak2022fastntk}, which has also been used to
study adversarial generative training~\citep{franceschi2022ganntk}; the frozen-target iteration of
Proposition~\ref{thm:k_step} is Landweber iteration~\citep{landweber1951,engl1996regularization}. We use this
machinery to pose the realization question rather than to answer it, since our measurements are
taken under AdamW, where the fixed-kernel model holds only approximately.

The refresh--realize question also connects to two older ideas. Deep reinforcement learning deliberately
holds regression targets fixed, refreshing target networks only periodically for
stability~\citep{mnih2015human}; Drifting sits at the other end, because its target is a finite-sample
estimate and a stale target mostly encodes the particular support it was built from. And, as with
minibatch estimates of kernel discrepancies such as MMD~\citep{gretton2012kernel}, the empirical
Drifting target depends on the composition of its sample; we measure that dependence at the level of
the direction a training step is asked to fit.

\section{Scope and Conclusion}
\label{sec:limitations}

\paragraph{Scope.} The ImageNet evidence comes from two independently trained source seeds of latent
Drifting at $256\times256$, reported individually. The reversal rests on agreement across evidence
rather than on seed count: both seeds order the arms the same way, the residual wall-clock mismatch
changes sign between them, and the local probes agree. The long-run comparison tests one against two
optimizer steps per refresh at approximately matched measured wall-clock; it does not establish an
ordering over arbitrary depths or under exact compute matching. The support-size law for $\beta^\ast$
is fitted against a finite large-support reference and describes that audit rather than a population
field. The comparisons are made on the production field, affinity floor included, and whether other
affinity constructions shift the balance between refreshing and realizing remains open.

\paragraph{Conclusion.} Drifting prescribes a motion for every generated sample and leaves its
execution to a shared network optimizer. We measured the gap between the two and what it implies for
how training compute should be spent. Fitting a frozen target for longer does move the network closer
to it, more so as training progresses, and with free extra steps it improves FID. Under approximately
matched measured wall-clock the budget is nonetheless better spent recomputing the field, on both
training seeds and in every local probe. Much of the explanation sits in the target: at fixed
parameters, a new finite support changes its direction far more than an optimizer step does, and
calibrating that target in field space does not carry over reliably to FID. Refresh frequency,
realization depth and support construction are therefore design choices of the Drifting training
loop in their own right, and the realization diagnostic introduced here measures the first two
directly.

\section*{Reproducibility Statement}
\Cref{sec:geometry,sec:allocation,sec:support} specify every measurement and comparison.
\Cref{app:implementation} records the configuration, the controls used to compare realization depths
without confounders, both budget definitions, and the deterministic paired protocol behind every
field-correction comparison. \Cref{app:exact_geometry,app:floor,app:calibration} give the exact
values behind every rounded number in the main text, and \cref{app:floor} documents the
production-equivalence gate of the affinity-floor diagnostic.

\section*{AI Use Statement}
In this work, we used generative AI tools (large language models) for the following tasks with
required disclosure: helping develop the conceptual framework (the refresh/realize framing and the
realization decomposition), formulating mathematical claims and assisting in writing proofs,
proposing and refining hypotheses, designing and giving feedback on the experimental methodology
(including controls, paired protocols and diagnostics), implementing methods (training-loop
modifications and diagnostic code), and interpreting results. We have not used generative AI tools
to generate synthetic datasets, and translation, dataset cleaning and reformatting, and qualitative
or thematic data analysis are not applicable to this work. Additionally, we used generative AI tools
for tasks with recommended disclosure: creating and editing software code, suggesting experimental
parameters, creating and modifying scientific figures, brainstorming, identifying and summarizing
related literature, formatting references, suggesting the structure and title of the paper, and
drafting and editing the manuscript. We have reviewed all AI-assisted work. All experiments were run
by the authors on their own compute and every reported number comes from those runs;
AI-assisted code was checked against the production implementation with numerical equivalence
gates and smoke tests before use. We take
responsibility for the final content of this work, including text, claims or artifacts produced
with the aid of generative AI.

\bibliographystyle{iclr2027_conference}
\bibliography{refs}

\clearpage
\appendix
\section{Complete Mathematical Derivations}
\label{app:proofs}

\subsection{Notation and stacking of multiple feature losses}
\label{app:stacking}
The original image objective sums squared losses over feature scales/locations $j$. Suppose a batch contains generated samples indexed by $i=1,\ldots,m$, and each feature block has weight $w_j>0$ after all implementation normalizations. Define
\begin{equation}
 h_\theta(\epsilon_i)
 =\begin{bmatrix}
 \sqrt{w_1}\,\phi_1(f_\theta(\epsilon_i))\\
 \vdots\\
 \sqrt{w_J}\,\phi_J(f_\theta(\epsilon_i))
 \end{bmatrix},
 \qquad
 v_i=\begin{bmatrix}
 \sqrt{w_1}\,V_{i,1}\\ \vdots\\ \sqrt{w_J}\,V_{i,J}
 \end{bmatrix}.
\end{equation}
Then the sum of feature losses is exactly a single Euclidean regression
\begin{equation}
\ell(\theta)=\frac12\left\|\mathbf H(\theta)-\sg(\mathbf H_0+\mathbf V)\right\|^2,
\end{equation}
up to an overall scalar normalization. Any mean reduction over samples/features multiplies the gradient by a constant; throughout the theory this constant is absorbed into the effective step $\eta$. Therefore the results apply to the multi-feature ImageNet objective without requiring each feature block to have equal dimension or scale.

\subsection{
One-step \texorpdfstring{$KV$}{KV} identity with a Taylor remainder
}
\label{app:proof_one_step}

\begin{lemma}[Second-order remainder]
\label{lem:taylor}
Let $H:\R^P\to\R^n$ be continuously differentiable on a convex neighborhood of $\theta_0$, and suppose its Jacobian is $L_J$-Lipschitz in operator norm:
\begin{equation}
\|J(\theta)-J(\theta')\|_{\mathrm{op}}\le L_J\|\theta-\theta'\|.
\end{equation}
Then for any displacement $\delta$ contained in the neighborhood,
\begin{equation}
H(\theta_0+\delta)=H(\theta_0)+J_0\delta+R(\delta),
\qquad
\|R(\delta)\|\le\frac{L_J}{2}\|\delta\|^2.
\label{eq:taylor_bound}
\end{equation}
\end{lemma}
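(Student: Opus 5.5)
The plan is the standard integral form of Taylor's theorem along the segment from $\theta_0$ to $\theta_0+\delta$. Because the neighborhood is convex, the whole segment $\theta_0+t\delta$, $t\in[0,1]$, stays inside it. Set $g(t)=H(\theta_0+t\delta)$. Since $H$ is continuously differentiable, $g$ is $C^1$ on $[0,1]$ with $g'(t)=J(\theta_0+t\delta)\,\delta$. The fundamental theorem of calculus, applied componentwise to the vector-valued $g$, then gives
\begin{equation*}
H(\theta_0+\delta)-H(\theta_0)=\int_0^1 J(\theta_0+t\delta)\,\delta\,dt .
\end{equation*}

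Next I subtract $J_0\delta=\int_0^1 J_0\delta\,dt$. This defines the remainder explicitly as
\begin{equation*}
R(\delta)=\int_0^1\big(J(\theta_0+t\delta)-J_0\big)\delta\,dt .
\end{equation*}
I bound its norm by pulling the norm inside the integral, using $\|\int u\|\le\int\|u\|$ for continuous vector-valued integrands. I then apply the operator-norm inequality $\|(J(\theta_0+t\delta)-J_0)\delta\|\le\|J(\theta_0+t\delta)-J_0\|_{\mathrm{op}}\|\delta\|$. The Lipschitz hypothesis bounds this by $L_J\,t\|\delta\|\cdot\|\delta\|$. Integrating $t$ over $[0,1]$ gives the factor $\tfrac12$, so $\|R(\delta)\|\le \tfrac{L_J}{2}\|\delta\|^2$, which is \cref{eq:taylor_bound}.

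No step is a real obstacle. The only points needing care are that convexity guarantees the segment lies where the hypotheses hold, and that the triangle inequality for Bochner/Riemann integrals is used for vector-valued functions. Continuity of $J$ makes every integrand continuous, so Riemann integrals suffice.

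As a remark for its later use in \cref{prop:one_step}: with $\delta=\Delta\thet=\eta\jmat^\top\vstack$ (the gradient step), the lemma yields $\Delta\hstack_1=\eta\kmat\vstack+R$ with $\|R\|\le\tfrac{L_J}{2}\|\Delta\thet\|^2$, which is the claimed $O(\|\Delta\thet\|^2)$.
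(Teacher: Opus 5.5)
Your proof is correct and follows the paper's own argument: the fundamental theorem of calculus along the segment $\theta_0+s\delta$, subtracting $J_0\delta$ to isolate $R(\delta)$, and bounding the integrand by $L_J s\|\delta\|^2$, which integrates to $\tfrac{L_J}{2}\|\delta\|^2$. You also state explicitly two points the paper leaves implicit, namely that convexity keeps the segment inside the neighborhood and that the triangle inequality for vector-valued integrals is what lets the norm move inside the integral.
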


\begin{proof}
By the fundamental theorem of calculus,
\begin{align}
H(\theta_0+\delta)-H(\theta_0)
&=\int_0^1 J(\theta_0+s\delta)\,\delta\,ds\\
&=J_0\delta+\int_0^1\left[J(\theta_0+s\delta)-J_0\right]\delta\,ds.
\end{align}
The second term is $R(\delta)$. Using the Lipschitz assumption,
\begin{align}
\|R(\delta)\|
&\le\int_0^1 L_J\,s\|\delta\|^2ds
=\frac{L_J}{2}\|\delta\|^2.
\end{align}
\end{proof}

\begin{proof}[Proof of Proposition~\ref{prop:one_step}]
The frozen target is $T=H_0+V$ and the loss is
\begin{equation}
\ell(\theta)=\frac12\|H(\theta)-T\|^2.
\end{equation}
At $\theta_0$, $H(\theta_0)-T=-V$, so by the chain rule
\begin{equation}
\nabla_\theta\ell(\theta_0)=J_0^\top(H_0-T)=-J_0^\top V.
\end{equation}
A gradient-descent step gives
\begin{equation}
\delta_1:=\theta_1-\theta_0=-\eta\nabla\ell(\theta_0)=\eta J_0^\top V.
\end{equation}
Applying Lemma~\ref{lem:taylor},
\begin{align}
H(\theta_1)-H_0
&=J_0\delta_1+R(\delta_1)\\
&=\eta J_0J_0^\top V+R(\delta_1)
=\eta KV+R(\delta_1),
\end{align}
with
\begin{equation}
\|R(\delta_1)\|
\le\frac{L_J}{2}\eta^2\|J_0^\top V\|^2.
\end{equation}
\end{proof}

\subsection{Proof and interpretation of tangent-spectral distortion}
\label{app:proof_spectrum}
Let $K=U\Lambda U^\top$ with orthonormal eigenvectors $u_r$ and nonnegative eigenvalues $\lambda_r$. Write $V=\sum_r a_r u_r$. Then
\begin{equation}
KV=\sum_r \lambda_r a_r u_r,
\end{equation}
which gives the band-energy identity
\begin{equation}\label{eq:band_energy}
\ip{\vstack}{\kmat\vstack}=\sum_r\lambda_r a_r^2 .
\end{equation}
For the cosine,
\begin{align}
\ip{V}{KV}&=\sum_r \lambda_r a_r^2,\\
\|V\|^2&=\sum_r a_r^2,\\
\|KV\|^2&=\sum_r\lambda_r^2a_r^2.
\end{align}
Hence
\begin{equation}\label{eq:cos_spectral}
\cos(\vstack,\kmat\vstack)=\frac{\sum_r\lambda_r a_r^2}{\sqrt{\sum_r a_r^2}\sqrt{\sum_r\lambda_r^2a_r^2}} .
\end{equation} Cauchy--Schwarz applied to the sequences $\{|a_r|\}$ and $\{\lambda_r|a_r|\}$ yields $\ip{V}{KV}\le\|V\|\|KV\|$. Equality holds iff the two sequences are linearly dependent on the support of $a_r$, i.e., iff there exists $c$ such that $\lambda_r|a_r|=c|a_r|$ for every active $r$. Thus all eigendirections carrying nonzero field coefficients must share the same eigenvalue.

A useful pairwise view makes the distortion even more explicit. For two active modes $i,j$,
\begin{equation}
\frac{|\ip{u_i}{KV}|/|\ip{u_j}{KV}|}
{|\ip{u_i}{V}|/|\ip{u_j}{V}|}
=\frac{\lambda_i}{\lambda_j}.
\label{eq:pairwise_amp}
\end{equation}
Thus an ill-conditioned tangent spectrum deterministically magnifies the relative prominence of top modes. This statement concerns tangent eigenmodes. It should not be interpreted as a claim about spatial/Fourier image frequencies.

For the finite-$k$ realization, replace $\lambda_r$ by $g_k(\lambda_r)=1-(1-\eta\lambda_r)^k$. Therefore
\begin{equation}
J\delta_k=\sum_r g_k(\lambda_r)a_r u_r.
\end{equation}
For every $\lambda_r>0$, $g_k(\lambda_r)\to1$ as $k\to\infty$ under the stability condition. Hence the relative reweighting between any two positive-eigenvalue modes disappears asymptotically:
\begin{equation}
\frac{g_k(\lambda_i)}{g_k(\lambda_j)}\to1.
\end{equation}
This is the precise sense in which repeated frozen-target fitting flattens the one-step tangent filter.

\subsection{
Exact linearized \texorpdfstring{$k$}{k}-step dynamics
}
\label{app:proof_kstep}

Freeze $J=J_0$ and $V$ and optimize the local least-squares model
\begin{equation}
F(\delta)=\frac12\|J\delta-V\|^2,
\qquad
\nabla F(\delta)=J^\top(J\delta-V).
\end{equation}
Starting from $\delta_0=0$, gradient descent is
\begin{equation}
\delta_{s+1}=\delta_s+\eta J^\top(V-J\delta_s).
\label{eq:gd_appendix}
\end{equation}
Define output residual $r_s=V-J\delta_s$. Then
\begin{align}
r_{s+1}
&=V-J\delta_{s+1}\\
&=V-J\delta_s-\eta JJ^\top(V-J\delta_s)\\
&=(I-\eta K)r_s.
\end{align}
Since $r_0=V$, induction yields
\begin{equation}
r_k=(I-\eta K)^kV.
\end{equation}
Because $J\delta_k=V-r_k$,
\begin{equation}
J\delta_k=\left[I-(I-\eta K)^k\right]V,
\end{equation}
proving Proposition~\ref{thm:k_step}.

An explicit expression for the parameter iterate follows by summing \cref{eq:gd_appendix}:
\begin{align}
\delta_k
&=\eta\sum_{s=0}^{k-1}J^\top r_s\\
&=\eta J^\top\sum_{s=0}^{k-1}(I-\eta K)^sV.
\label{eq:delta_geometric}
\end{align}
This expression is valid even if $K$ is singular; no inverse has been introduced.

\subsection{SVD proof of the Moore--Penrose limit}
\label{app:proof_pinv}
Let the compact singular value decomposition be
\begin{equation}
J=U_r\Sigma_rW_r^\top,
\qquad
\Sigma_r=\diag(\sigma_1,\ldots,\sigma_r),
\end{equation}
where $\sigma_i>0$ and $r=\mathrm{rank}(J)$. Extend $U_r$ to an orthonormal basis and decompose
\begin{equation}
V=\sum_{i=1}^r a_i u_i+V_\perp,
\qquad
V_\perp\in\Null(J^\top).
\end{equation}
Because $K=U_r\Sigma_r^2U_r^\top$, the finite-$k$ filter
\begin{equation}\label{eq:k_filter}
g_k(\lambda)=1-(1-\eta\lambda)^k
\end{equation}
gives
\begin{equation}
J\delta_k
=\sum_{i=1}^r
\left[1-(1-\eta\sigma_i^2)^k\right]a_i u_i.
\label{eq:output_svd}
\end{equation}
The null component $V_\perp$ is never realized.

For the parameter iterate, use $J^\top u_i=\sigma_iw_i$ in \cref{eq:delta_geometric}:
\begin{align}
\delta_k
&=\sum_{i=1}^r
\eta\sigma_i a_i
\sum_{s=0}^{k-1}(1-\eta\sigma_i^2)^s w_i\\
&=\sum_{i=1}^r
\frac{a_i}{\sigma_i}
\left[1-(1-\eta\sigma_i^2)^k\right]w_i.
\label{eq:param_svd}
\end{align}
If $0<\eta<2/\lambda_{\max}(K)=2/\sigma_1^2$, then $|1-\eta\sigma_i^2|<1$ for every nonzero singular direction. Hence
\begin{equation}
\delta_k\to\sum_{i=1}^r\frac{a_i}{\sigma_i}w_i
=W_r\Sigma_r^{-1}U_r^\top V
=J^\dagger V.
\end{equation}
Similarly, \cref{eq:output_svd} converges to
\begin{equation}
\sum_{i=1}^r a_i u_i=U_rU_r^\top V=JJ^\dagger V.
\end{equation}
Because every iterate $\delta_k$ lies in $\Range(J^\top)$, no component in $\Null(J)$ is introduced. The limit is therefore the minimum-Euclidean-norm least-squares solution.

\subsection{
Fixed-point interpretation and implicit solution of
\texorpdfstring{$K^{-1}V$}{K inverse V}
}
\label{app:fixedpoint}
There is an equivalent output-space interpretation. Since $\delta_0=0$, every iterate can be written $\delta_k=J^\top u_k$ for some $u_k$. Substituting into \cref{eq:gd_appendix},
\begin{equation}
J^\top u_{k+1}
=J^\top\left[u_k+\eta(V-Ku_k)\right].
\end{equation}
One admissible recursion is
\begin{equation}
u_{k+1}=u_k+\eta(V-Ku_k),\qquad u_0=0.
\label{eq:richardson}
\end{equation}
This is Richardson/Landweber fixed-point iteration for $Ku=V$. If $K$ is invertible, its fixed point is $u^*=K^{-1}V$, yielding $\delta^*=J^\top K^{-1}V=J^\dagger V$. If $K$ is singular, the iteration converges on the positive-eigenvalue subspace and the induced parameter limit remains $J^\dagger V$.

This identity formalizes the central algorithmic observation: the network parameters implicitly carry $J^\top u_k$, so an explicit output-space inverse variable need not be stored.

\subsection{Finite-step spectral regularization}
\label{app:regularization}
For a $K$-eigenvector with eigenvalue $\lambda$, the output gain is $g_k(\lambda)=1-(1-\eta\lambda)^k$. If $0<\eta\le1/\lambda_{\max}$, then $0\le1-\eta\lambda\le1$ and
\begin{equation}
\frac{\partial}{\partial k}g_k(\lambda)>0,
\qquad
\frac{\partial}{\partial\lambda}g_k(\lambda)>0
\end{equation}
in the continuous extension. Thus large-$\lambda$ directions are realized first. For $\eta\lambda\ll1$, the binomial expansion gives
\begin{equation}
g_k(\lambda)=k\eta\lambda+O(k^2\eta^2\lambda^2).
\end{equation}
The parameter coefficient from \cref{eq:param_svd} is
\begin{equation}
c_{k,i}=\frac{a_i}{\sigma_i}\left[1-(1-\eta\sigma_i^2)^k\right].
\end{equation}
For weak singular values, $\eta\sigma_i^2\ll1$,
\begin{equation}
c_{k,i}\approx k\eta\sigma_i a_i,
\end{equation}
whereas the exact pseudoinverse coefficient is $a_i/\sigma_i$. Early stopping therefore avoids the $1/\sigma_i$ amplification of extremely weak directions. This is the precise sense in which finite $k$ acts as iterative regularization.

For a desired directional realization error $|1-g_k(\lambda)|\le\varepsilon$,
\begin{equation}
|1-\eta\lambda|^k\le\varepsilon
\quad\Longleftrightarrow\quad
k\ge\frac{\log\varepsilon}{\log|1-\eta\lambda|}.
\end{equation}
When $\eta\lambda\ll1$, $\log(1-\eta\lambda)\approx-\eta\lambda$, giving
\begin{equation}\label{eq:k_needed}
k\gtrsim\frac{\log(1/\varepsilon)}{\eta\lambda}.
\end{equation}
Small $k$ is therefore a low-order iterative correction rather than an accurate pseudoinverse solve, which is consistent with the measured residuals in \cref{tab:geometry_exact}.

\subsection{Nonlinear network: what remains exact}
\label{app:nonlinear}
The linearized formulas are exact for the frozen-J model. For the real nonlinear network, the optimization objective
\begin{equation}
\min_\theta\frac12\|H(\theta)-T\|^2
\end{equation}
is itself exact; only its interpretation as a fixed polynomial in $K_0$ is local. At micro-step $s$,
\begin{equation}
\theta_{s+1}=\theta_s-\eta J(\theta_s)^\top(H(\theta_s)-T).
\end{equation}
If the Jacobian remains close to $J_0$ over the micro-trajectory, Lemma~\ref{lem:taylor} controls the mismatch between the nonlinear feature displacement and $J_0\delta_s$. Accordingly, we use the local model to interpret realization rather than to predict it: every realization measurement reported in this paper uses the actual finite feature displacement produced by the optimizer, not a linearized prediction.

\section{Preconditioned Realization and AdamW}
\label{app:preconditioned}
The public image system uses AdamW rather than Euclidean gradient descent. A useful intermediate model freezes a symmetric positive-definite parameter preconditioner $P\succ0$ during one outer realization solve:
\begin{equation}
\delta_{s+1}=\delta_s+\eta P J^\top(V-J\delta_s).
\label{eq:precond_iter}
\end{equation}
Define
\begin{equation}
K_P=JPJ^\top.
\end{equation}
Then the residual obeys
\begin{equation}
r_{s+1}=(I-\eta K_P)r_s,
\end{equation}
so
\begin{equation}
J\delta_k=[I-(I-\eta K_P)^k]V.
\end{equation}
The derivation is identical to that of Proposition~\ref{thm:k_step}. Let $A=JP^{1/2}$ and write $\delta=P^{1/2}w$. The fixed-point problem becomes $Aw\approx V$, whose minimum-Euclidean-norm solution is $w^*=A^\dagger V$. Therefore
\begin{equation}
\delta^*=P^{1/2}A^\dagger V
=PJ^\top(JPJ^\top)^\dagger V.
\label{eq:weighted_pinv}
\end{equation}
This is the solution minimizing $\delta^\top P^{-1}\delta$ among least-squares solutions. A fixed diagonal preconditioner can therefore flatten or reshape the relevant tangent spectrum and accelerate some directions.

Actual AdamW has time-varying first- and second-moment states and decoupled weight decay, so \cref{eq:precond_iter} is not an exact model of the full optimizer. We use it only to motivate two diagnostics: (i) compare $J(-g)$ with the actual $J\Delta\theta_{\rm AdamW}$ at the same state; and (ii) compare repeated AdamW to repeated SGD on the same frozen target. No theorem in the paper should claim that AdamW converges to the Euclidean Moore--Penrose solution.

\section{Controlled Two-Dimensional Study}
\label{app:toy}

The two-dimensional study makes realization and allocation visible. The target is a Swiss roll,
the generator is an MLP acting on a fixed latent bank, and the drift field is built by the same
attraction--repulsion construction as the ImageNet implementation (single temperature, generated
samples as the repulsive support). At a fixed intermediate training state we freeze the target
$T=X_0+V$, clone the model, and take $k\in\{1,2,4,8,16\}$ micro-steps with either gradient descent
or AdamW, measuring the cosine between the realized displacement $\Delta X_k=X_k-X_0$ and $V$ and
the residual $\norm{V-\Delta X_k}/\norm{V}$ (\cref{tab:toy_optimizer}, \cref{fig:realization}).
We then train from scratch for $10{,}000$ outer iterations at each $k$ and compare the
sliced-Wasserstein distance to the target under four budget definitions
(\cref{fig:toy_budgets}).

\begin{table}[t]
\centering
\scriptsize
\caption{
\textbf{Finite-$k$ realization under GD and AdamW.}
Cosine measures directional agreement between the prescribed field $V$ and the actual finite output displacement $\Delta X_k$;
residual is $\|V-\Delta X_k\|/\|V\|$.
The audit is performed at a fixed intermediate training state with the same frozen target for all $k$.
The GD learning rate is chosen from the local stability condition, whereas AdamW uses the practical fixed learning rate; the table is therefore intended to compare the \emph{trend with $k$}, not the absolute performance of the two optimizers.
}
\label{tab:toy_optimizer}
\resizebox{\linewidth}{!}{%
\begin{tabular}{llcccccc}
\toprule
Dataset & Optimizer & Metric
& $k=1$ & $k=2$ & $k=4$ & $k=8$ & $k=16$ \\
\midrule

\multirow{4}{*}{Swiss roll}
& \multirow{2}{*}{GD}
& cosine
& 0.342 & 0.448 & 0.516 & 0.585 & 0.651 \\
&
& residual
& 0.953 & 0.926 & 0.889 & 0.841 & 0.785 \\

\cmidrule(lr){2-8}

&
\multirow{2}{*}{AdamW}
& cosine
& 0.316 & 0.423 & 0.561 & 0.712 & \textbf{0.893} \\
&
& residual
& 0.949 & 0.909 & 0.834 & 0.703 & \textbf{0.457} \\

\midrule

\multirow{4}{*}{Checkerboard}
& \multirow{2}{*}{GD}
& cosine
& 0.370 & 0.469 & 0.553 & 0.621 & 0.694 \\
&
& residual
& 0.960 & 0.933 & 0.892 & 0.836 & 0.765 \\

\cmidrule(lr){2-8}

&
\multirow{2}{*}{AdamW}
& cosine
& 0.277 & 0.449 & 0.615 & 0.775 & \textbf{0.927} \\
&
& residual
& 0.961 & 0.894 & 0.792 & 0.637 & \textbf{0.374} \\

\bottomrule
\end{tabular}}
\end{table}

\begin{figure}[t]
    \centering
    \includegraphics[width=\linewidth]{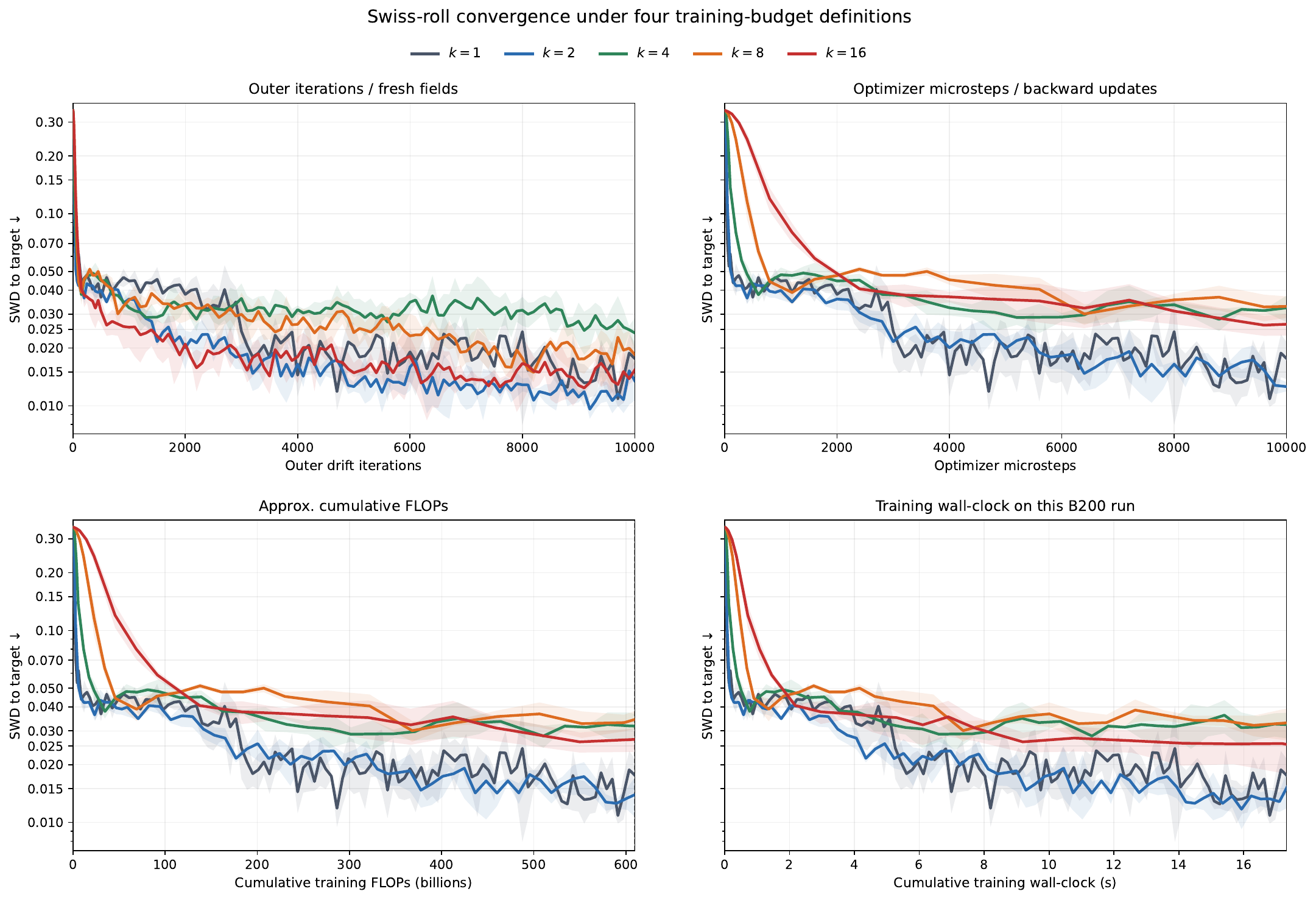}
    \caption{\textbf{How realization depth ranks depends on what is held fixed.} Swiss-roll
    sliced-Wasserstein distance under four budget definitions. Ranking the same runs by outer
    iterations, optimizer micro-steps, FLOPs and wall-clock gives different orderings, which is
    why the ImageNet comparison reports measured elapsed time for every arm.}
    \label{fig:toy_budgets}
\end{figure}

\subsection{Long-horizon toy training under matched budgets}
\label{app:toy_budget}
\Cref{fig:toy_budgets} reports the full training trajectories behind
\cref{tab:toy_optimizer}. Each curve is the mean over three paired seeds with one standard deviation
shaded; the vertical axis is logarithmic. For the optimizer-update, FLOP and wall-clock comparisons,
curves are shown only over the budget range available to every realization depth.

\section{ImageNet Protocol}
\label{app:implementation}

\subsection{Base configuration and reproduction}
All ImageNet results use the public JAX release of~\citet{deng2026drifting} in its latent
$256\times256$ configuration with a pretrained latent MAE feature space, temperatures
$\{0.02,0.05,0.2\}$, AdamW, and EMA weights for evaluation. We reproduced the public ablation
baseline before running any comparison in this paper; all comparisons branch from reproduced
checkpoints rather than from released weights. FID uses 50k generated samples at CFG $2.0$ with a
fixed evaluation batch size, and every arm of a comparison uses identical evaluation arguments so
that generation noise is common across arms.

\subsection{Realization depth without hidden confounders}
For $k>1$ the frozen target must be recomputed from the parameters at the start of the outer
iteration, never from the parameters as they move within it; our implementation rematerializes the
target from the captured outer parameters for every micro-step after the first. Exponential moving
average is updated once per outer iteration for every $k$, so arms compared at a fixed outer
horizon receive the same number of EMA updates. Weight decay is applied per micro-step at the rate
that makes the per-outer decay identical across $k$, i.e.\ $(1-\eta\lambda_{\mathrm{inner}})^k
=1-\eta\lambda_{\mathrm{outer}}$. Without these three controls a $k$ comparison silently changes
target staleness, averaging horizon and regularization at the same time.

\subsection{Resource-matched protocol}
Two budget definitions appear in \cref{sec:allocation}. \emph{Equal outer iterations} fixes the
number of refreshes and lets $k$ determine the number of optimizer updates; this favors larger
$k$, and we report it first. \emph{Approximately matched measured wall-clock} fixes the scheduler
elapsed time; run lengths were set from an estimated per-step cost before the runs and the
realized elapsed times are reported as measured, not as targets. We report both, together with
optimizer-update counts, because ranking by outer iterations, by micro-steps, by FLOPs and by
wall-clock can disagree (\cref{fig:toy_budgets}).

\subsection{Deterministic paired protocol for field-correction comparisons}
Every $\beta$ comparison in \cref{sec:support} is run as a matched pair: the two arms restore the
same full training state, including optimizer moments and EMA, and are seeded per step before the
data iterator and the memory-bank draw, so the two runs see identical batches, identical
generator noise and identical support draws. The only difference between the arms is $\beta$.
Each arm's positive support is drawn from a per-class bank whose size is twice the support size,
and bank occupancy is asserted to exceed the support size at the first scientific update so that
sampling never silently falls back to sampling with replacement.

\section{Exact Realization Measurements}
\label{app:diagnostics}
\label{app:exact_geometry}

All quantities below are computed from optimizer-induced displacements at frozen checkpoints in the
scaled coordinate of the loss with the training block weighting. The identity
$\rho_k^2=(1-a_k)^2+o_k^2$ reconstructs every recorded $\rho_k$ to within $1.4\times10^{-9}$.

\begin{table}[h]
\centering\small
\caption{Full decomposition at checkpoint $30$k, $B_{\mathrm{eff}}=64$ (the data behind
\cref{fig:story}b).}
\label{tab:geometry_exact}
\begin{tabular}{rrrrrrrr}
\toprule
$k$ & $\rho_k$ & $\cos$ & $m_k$ & $a_k$ & $G_{\parallel,k}$ & $P_{\perp,k}$ & $1-\rho_k^2$\\
\midrule
1  & 0.999505 & 0.03266 & 0.04146 & 0.001354 & 0.002706 & 0.001717 & $+0.000989$\\
2  & 1.000303 & 0.03446 & 0.07681 & 0.002647 & 0.005287 & 0.005893 & $-0.000606$\\
4  & 0.998774 & 0.05935 & 0.09208 & 0.005465 & 0.010900 & 0.008449 & $+0.002451$\\
8  & 0.995368 & 0.09702 & 0.11004 & 0.010676 & 0.021238 & 0.011995 & $+0.009243$\\
16 & 0.988698 & 0.15148 & 0.12983 & 0.019666 & 0.038945 & 0.016468 & $+0.022477$\\
32 & 0.981388 & 0.19359 & 0.16914 & 0.032744 & 0.064415 & 0.027537 & $+0.036878$\\
\bottomrule
\end{tabular}
\end{table}

\paragraph{Batch geometry.} Realizability also depends on the effective batch. At checkpoint $5$k,
the one-step residual ratio is $\rho_1=0.99961$, $0.99964$, $1.00189$, $1.00365$ and $1.00385$ for
$B_{\mathrm{eff}}=4,8,16,32,64$, and the first depth with $\rho_k<1$ is $k=1,1,4,8,16$ respectively.
Larger batches spread a fixed parameter update over more generated samples, so a single step executes
less of each sample's prescribed motion.

\section{Support Sensitivity and Local Allocation Probes}
\label{app:extra_ablations}

\paragraph{Support sensitivity.} Both cosines in \cref{sec:support} are measured at frozen
checkpoints with labels, generator noise and guidance draws held fixed. The same-support value takes
one optimizer step and rebuilds the target from the identical support; the resampled value holds the
parameters fixed and redraws only the support. Measuring both from the same probe ensures that the
difference reflects the target, not the probe.

\paragraph{Local refresh/realize probes.} Each probe starts from a checkpoint $\theta_0$ and a
target $T_0$ built from support $S_0$, takes one optimizer update to $\theta_1$, and rebuilds the
target at $\theta_1$ from the same support ($T_1^{\mathrm{same}}$) and from a disjoint freshly drawn
support ($T_1^{\mathrm{new}}$), with labels, generator noise and guidance draws fixed. The two cosines
in \cref{fig:oracle}a are $\cos(T_0,T_1^{\mathrm{same}})$, which isolates the parameter-induced change,
and $\cos(T_0,T_1^{\mathrm{new}})$, which adds support resampling. From the same $\theta_1$,
\textsc{realize} takes a second update toward $T_0$ and \textsc{refresh} a second update toward
$T_1^{\mathrm{new}}$; both are evaluated on two held-out supports unseen by either branch, giving
held-out losses $L_R$ and $L_F$. \Cref{fig:oracle}b plots $100\,(L_R-L_F)/L_R$, positive when
refreshing is better. Each cell holds eight probes.

\begin{table}[h]
\centering\small
\caption{Local refresh/realize probes, cell means over eight probes each.}
\label{tab:oracle_cells}
\begin{tabular}{lcccc}
\toprule
cell & $\cos(T_0,T_1^{\mathrm{same}})$ & $\cos(T_0,T_1^{\mathrm{new}})$ & refresh gain & refresh wins\\
\midrule
$N=64$, 30k  & 0.9790 & 0.3300 & 0.474\% & 8/8\\
$N=128$, 30k & 0.9780 & 0.4463 & 0.472\% & 8/8\\
$N=256$, 30k & 0.9767 & 0.5456 & 0.482\% & 8/8\\
$N=256$, 20k & 0.9704 & 0.5466 & 0.678\% & 8/8\\
$N=256$, 10k & 0.9476 & 0.5526 & 1.321\% & 8/8\\
\midrule
all 40 & 0.9704 & 0.4842 & 0.685\% (median 0.562\%) & 40/40\\
\bottomrule
\end{tabular}
\end{table}

One probe ($N=256$, $10$k, probe~7) was initially rejected by the $10^{-4}$ production-equivalence
gate on the target decomposition (relative error $1.13\times10^{-4}$) before any oracle quantity was
computed. It was rerun at the $5\times10^{-4}$ tolerance adopted from the three-way numerical audit
described in \cref{app:floor}, with the checkpoint, support draws, updates, held-out evaluation and
decision rule unchanged. The tolerance was chosen from that audit, not from any probe outcome.

\section{Affinity-Floor Diagnostic}
\label{app:floor}

\paragraph{Protocol.} At checkpoint $30$k with temperatures $(0.2,0.05,0.02)$, $64$ positives, $16$
negatives and micro-batch $4$, we evaluate four micro-slices. For each slice the clip-on and clip-off
fields share the checkpoint, parameters, labels, generator noise, guidance draws, generated features,
positive and negative supports, feature blocks and block weights; the only change is
$\sqrt{\max(p^{\mathrm{row}}p^{\mathrm{col}},10^{-6})}$ versus $\sqrt{p^{\mathrm{row}}p^{\mathrm{col}}}$.
The field construction reproduces the production code: weighted distance scale including the
unconditional-negative guidance weight, generated-sample self mask, row and column softmaxes,
attraction and repulsion coefficients, and per-temperature RMS normalization. Metrics use the
training block weighting and are pooled from summed inner products and norms across slices rather
than by averaging per-slice cosines. The materiality criterion (aggregated cosine $<0.99$ or
relative $L_2>0.05$) was fixed before the run.

\paragraph{Correctness.} Before any metric is reported, each block's clip-on target is compared with
the compiled production \texttt{drift\_target}. Our mirror and an independent established audit
mirror agree to $2.9\times10^{-6}$, while both differ from the compiled production function by
$\approx2.3\times10^{-4}$ on the worst block, a numerical discrepancy shared by two independently
written mirrors. The equivalence gate was therefore set at $5\times10^{-4}$ on the basis of this
three-way comparison, independently of the scientific outcome. All four slices passed, with maximum
relative target discrepancies $6.4\times10^{-5}$, $9.8\times10^{-5}$, $1.6\times10^{-5}$ and
$1.4\times10^{-4}$, and scale discrepancies below $2.3\times10^{-7}$.

\begin{table}[h]
\centering\small
\caption{Affinity-floor diagnostic per slice and pooled. Relative change is
$\norm{V_{\mathrm{clip}}-V_{\mathrm{noclip}}}/\norm{V_{\mathrm{clip}}}$; norm ratio is
$\norm{V_{\mathrm{noclip}}}/\norm{V_{\mathrm{clip}}}$. Every temperature branch keeps unit norm
ratio because it is RMS-normalized; the change is in direction.}
\label{tab:floor_exact}
\begin{tabular}{llcccc}
\toprule
slice & field & floor active & cosine & rel.\ change & norm ratio\\
\midrule
0 & $\tau=0.2$  & 1.27\% & 0.99976 & 0.022 & 1.000\\
  & $\tau=0.05$ & 49.83\% & 0.99121 & 0.133 & 1.000\\
  & $\tau=0.02$ & 90.42\% & 0.96938 & 0.247 & 1.000\\
  & aggregated & --- & 0.98930 & 0.147 & 0.975\\
\addlinespace[2pt]
1 & $\tau=0.2$  & 1.28\% & 0.99968 & 0.025 & 1.000\\
  & $\tau=0.05$ & 48.49\% & 0.99583 & 0.091 & 1.000\\
  & $\tau=0.02$ & 90.21\% & 0.98267 & 0.186 & 1.000\\
  & aggregated & --- & 0.99441 & 0.106 & 0.985\\
\addlinespace[2pt]
2 & $\tau=0.2$  & 1.15\% & 0.99941 & 0.034 & 1.000\\
  & $\tau=0.05$ & 47.49\% & 0.99482 & 0.102 & 1.000\\
  & $\tau=0.02$ & 90.12\% & 0.98060 & 0.197 & 1.000\\
  & aggregated & --- & 0.99329 & 0.116 & 0.984\\
\addlinespace[2pt]
3 & $\tau=0.2$  & 1.45\% & 0.99976 & 0.022 & 1.000\\
  & $\tau=0.05$ & 53.95\% & 0.99478 & 0.102 & 1.000\\
  & $\tau=0.02$ & 91.89\% & 0.97647 & 0.217 & 1.000\\
  & aggregated & --- & 0.99271 & 0.121 & 0.980\\
\midrule
pooled & $\tau=0.2$  & \phantom{0}1.29\% & 0.99965 & 0.026 & 1.000\\
       & $\tau=0.05$ & 49.94\% & 0.99416 & 0.108 & 1.000\\
       & $\tau=0.02$ & 90.66\% & 0.97728 & 0.213 & 1.000\\
       & \textbf{aggregated} & --- & \textbf{0.99246} & \textbf{0.123} & 0.981\\
       & physical $\sigma V$ & --- & 0.99550 & 0.095 & 0.985\\
\bottomrule
\end{tabular}
\end{table}

\section{Field-Correction Calibration}
\label{app:calibration}

\paragraph{Audit.} For each $N$, the same $N$ positives are split into disjoint halves and the
least-squares $\beta^\ast$ is fitted against a disjoint large-support reference in field space,
with $16$ probes at checkpoint $30$k and $8$ at $60$k. Coefficients at $N=32,64,128,256$ are
$1.7409$, $1.3192$, $0.9015$, $0.6288$ at $30$k (exponent $-0.4957$, $R^2=0.9958$) and $1.7594$,
$1.3410$, $0.9305$, $0.6634$ at $60$k (exponent $-0.4749$, $R^2=0.9966$). Probe-level spread is a
stability diagnostic, not a confidence interval, since probes share parameters and support pools.

\paragraph{Downstream.} Continuations run from each seed's checkpoint $30$k to $35$k under a
deterministic paired protocol in which the two arms see identical batches, generator noise and
support draws, with the positive bank twice the support size. \Cref{tab:calibration_exact} lists
every paired difference; \cref{tab:beta_one} compares the calibrated coefficient with $\beta=1$.

\begin{table}[h]
\centering\small
\caption{Paired $\mathrm{FID}(\beta^\ast)-\mathrm{FID}(0)$ for each evaluation seed.}
\label{tab:calibration_exact}
\begin{tabular}{llrrrr}
\toprule
seed & $N$ & eval 0 & eval 1 & eval 2 & mean\\
\midrule
42 & 32  & $-0.0941$ & $-0.0341$ & $-0.0455$ & $-0.0579$\\
42 & 64  & $-0.0943$ & $-0.0856$ & $+0.0524$ & $-0.0425$\\
42 & 128 & $-0.1405$ & $+0.1000$ & $-0.0690$ & $-0.0365$\\
43 & 32  & $-0.0484$ & $+0.0143$ & $-0.0899$ & $-0.0413$\\
43 & 64  & $-0.0500$ & $-0.0489$ & $+0.0026$ & $-0.0321$\\
43 & 128 & $-0.0732$ & $+0.0154$ & $-0.0362$ & $-0.0313$\\
\bottomrule
\end{tabular}
\end{table}

\begin{table}[h]
\centering\small
\caption{Calibrated $\beta^\ast$ versus the parameter-free $\beta=1$ (FID averaged over three
evaluation seeds). The mean difference is $-0.0046$ and the sign disagrees across seeds at $N=128$.}
\label{tab:beta_one}
\begin{tabular}{llrrrr}
\toprule
seed & $N$ & $\beta=0$ & $\beta=1$ & $\beta^\ast$ & $\beta^\ast-\beta{=}1$\\
\midrule
42 & 32  & 8.0228 & 7.9947 & 7.9649 & $-0.0298$\\
43 & 32  & 8.0661 & 8.0434 & 8.0248 & $-0.0186$\\
42 & 128 & 7.5843 & 7.4728 & 7.5478 & $+0.0750$\\
43 & 128 & 7.6194 & 7.6329 & 7.5881 & $-0.0448$\\
\bottomrule
\end{tabular}
\end{table}

\end{document}